\newtheorem{theorem}{Theorem}
\newtheorem{lemma}{Lemma}
\def\figstochasticbudget{
\begin{figure*}[t]
\center
\includegraphics[width=\linewidth,natwidth=8.64in,natheight=1.92in]{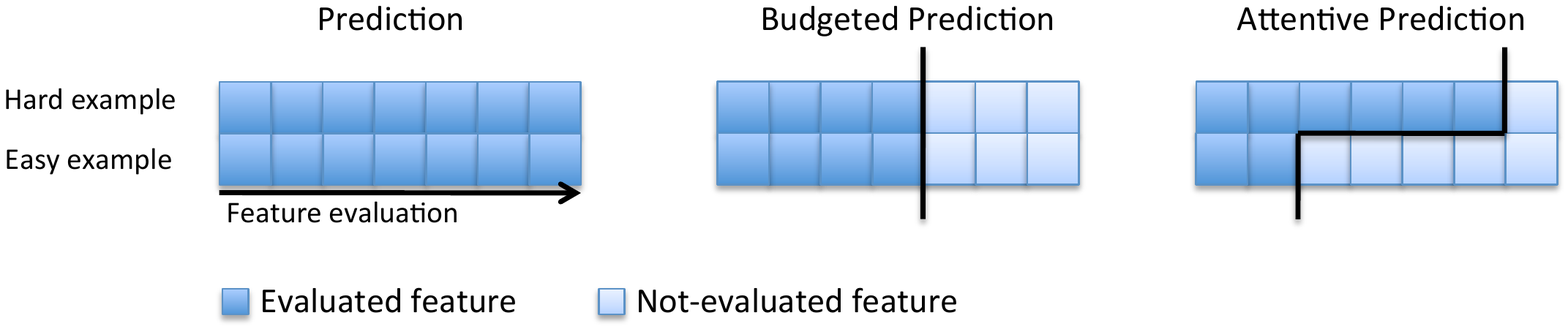}
\caption{Two examples are classified. The first is hard to classify, the second easy. The budgeted learning approach would evaluate the same number of features for both examples, whereas the stochastic would evaluate features according to how hard is the example to classify, while maintaining an average budget of $O(\sqrt{n})$ features.}
\label{fig:stochastic-budget}
\end{figure*}
}
\def\figbbdecisionerror{
\begin{figure*}[t]
\centering
\mbox{
\subfigure[A simulation of the Constant-STST boundary with $X_i \sim N(0.05,1)$. The required decision error rate is on the X-axis, and the actual is on Y-axis. Since applying the Constant-STST results in lower error rates than required, we observe that the boundary is conservative.]{\includegraphics[width=2.7in,height=2.7in]{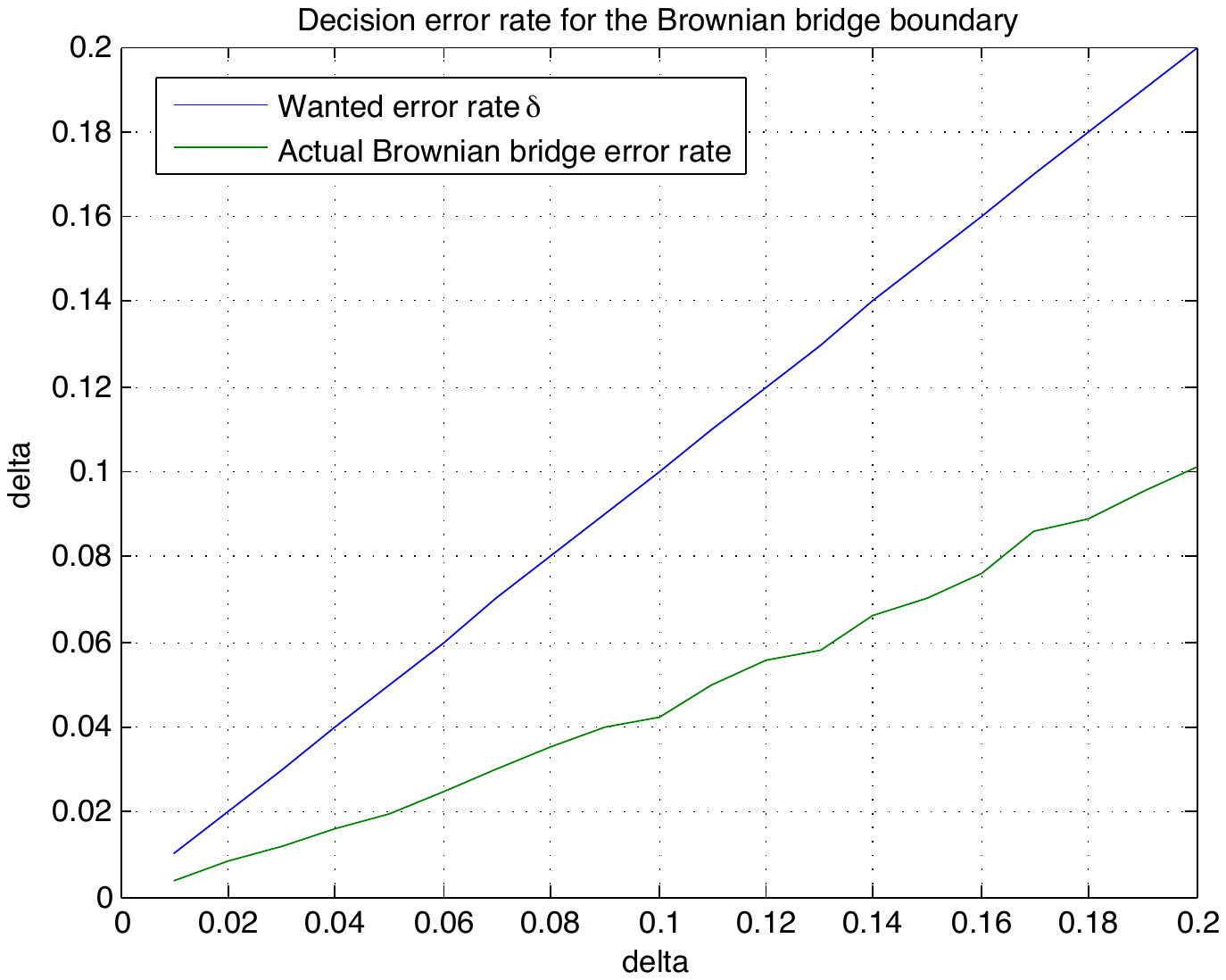} \label{fig:bb_error_rate}
} 
\quad
\subfigure[The boundary behaves similarly to what is expected from theory. It computes in the order of $O(\sqrt  n)$ features.]{\includegraphics[width=2.7in,height=2.7in]{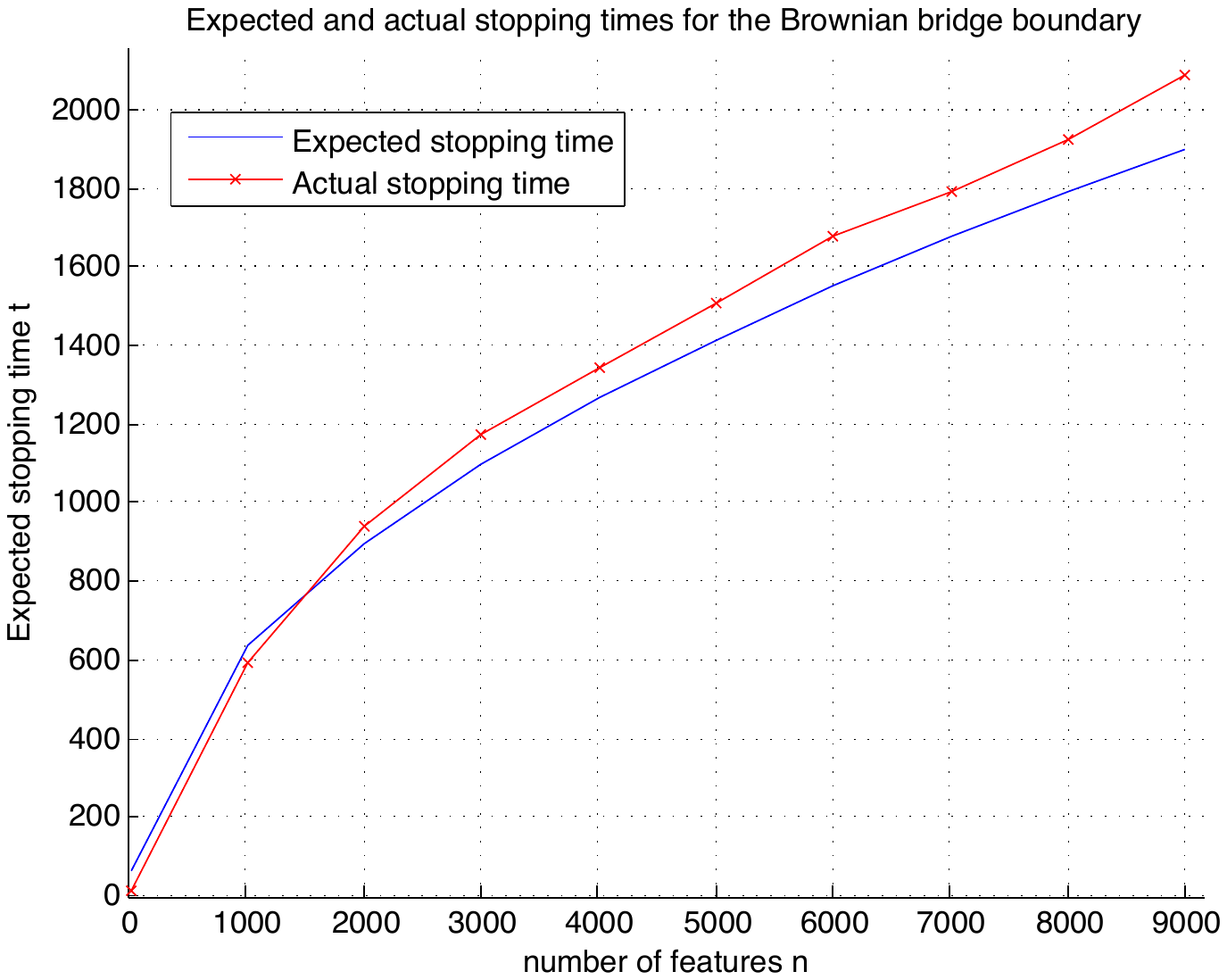} \label{fig:bb_stopping_time}
}
}
\caption{Performance of the Brownian bridge boundary.}
\end{figure*}
}
\title{Focus of Attention for Linear Predictors}
\author{
Raphael Pelossof \\
Computational Biology\\
Memorial Sloan Kettering Cancer Center\\
New York, NY 10065 \\
\texttt{pelossof@cbio.mskcc.org} \\
\And
Zhiliang Ying \\
Statistics Department \\
Columbia University \\
New York, NY 10027 \\
\texttt{zying@stat.columbia.edu}
}
\begin{document}

\maketitle

\begin{abstract} 
We present a method to stop the evaluation of a prediction process when the result of the full evaluation is obvious. This trait is highly desirable in prediction tasks where a predictor evaluates all its features for every example in large datasets. We observe that some examples are easier to classify than others, a phenomenon which is characterized by the event when most of the features agree on the class of an example.
By stopping the feature evaluation when encountering an easy-to-classify example, the predictor can achieve substantial gains in computation. 
Our method provides a natural attention mechanism for linear predictors where the predictor concentrates most of its computation on hard-to-classify examples and quickly discards easy-to-classify ones. By modifying a linear prediction algorithm such as an SVM or AdaBoost to include our attentive method we prove that the average number of features computed is $O(\sqrt{n \log \delta^{-0.5}})$ where $n$ is the original number of features, and $\delta$ is the error rate incurred due to early stopping. We demonstrate the effectiveness of Attentive Prediction on MNIST, Real-sim, Gisette, and synthetic datasets.
\end{abstract} 

\section{Introduction}
We wish to avoid evaluating all the weak hypotheses for each example, such that we evaluate less features for easy-to-classify examples. However, to filter an example we need to know whether it is informative or not. The majority vote is used to measure how important an example is for learning, this is proportional to the magnitude of the majority vote. When there is a strong agreement between the features then the majority vote will have larger magnitude than when the features disagree. Our goal is to compute the least number of weak hypotheses possible before we decide whether the majority vote will end below or above an importance threshold. 
Filtering out un-informative examples, and trying to compute as few hypotheses as possible are closely related problems \cite{blum1997selection}. 

The intuition behind this work is prevalent in many natural decision making domains. For example, in finance, suppose we are interested in buying a certain stock, and we have 10 financial advisors at our disposal. If we sequentially ask them whether to buy the stock, and the first four consecutive advisors agree we should buy, we may stop, and decide to buy. With low probability we might incur an error since the remaining six advisors might all vote in the opposite direction. If on the other hand the advisors vote opposing to each other  "yes, no, yes, no...", we will not gain enough confidence in either direction, and will end up asking all of them. Such is the case also in medicine, when the doctor tries to diagnose whether we have a certain condition, he will put us through a sequence of tests. If they all come out negative, then he will stop, and diagnose that we are healthy (with a certain error rate), if the are all positive, he will stop and diagnose that we have the particular condition. However, if the test oppose each other, he will keep on testing with more refined (and probably more expensive) tests. Finally, in face detection, the same type of attention mechanism also holds. In their seminal work \citep{viola01rapid} proposed an attentional cascade, where the face detector could stop the evaluation of the classifier if it did not find any of the important features. The thinking was, if there are no eyes, no nose, no mouth and no ears, don't look for a chin. Attention is achieved in these cases by limiting the amount of computation/time to examples that are easy to classify, and increasing computation time for harder-to-classify examples.

All these decision making problems fall under the same roof of an underlying attention mechanism that stops computation when a the result of the full feature evaluation will produce with high likelihood the result of the partial evaluation, ie. an obvious positive or negative example. We argue that by thresholding the partial evaluation of all votes at each point with a constant, predicting the label we obtain once the partial evaluation first hit the constant stopping threshold, we obtain an attention mechanism that is similar in nature to the reasoning behind these three examples.


We propose to early stop the computation of feature evaluations for these examples by connecting Sequential Analysis \citep{wald45tests,lan82sequential} and Brownian motion analysis to margin based learning algorithms.

We use the terms margin and full margin to describe the summation of all the feature evaluations, and partial margin as the summation of a part of the feature evaluations.
The calculation of the margin is broken up for each example in the dataset. This break-up allows the algorithm to make a decision after the evaluation of each feature whether the next feature should also be evaluated or the feature evaluation should be stopped, and the label predicted.
By making a decision after each evaluation we are able to early stop the evaluation of features on examples with a large partial margin after having evaluated only a few features. Examples with a large partial margin are unlikely to have a full margin below the required threshold. Therefore, by rejecting these examples early, large savings in computation are achieved. This is quite different from the budgeted approach where a constant smaller number of features is evaluated for all examples, in which case, all examples are treated equally.

Instead of looking at the classification error we look at stop-error. Stop-errors occur when the algorithm stops the partial feature evaluation and predicts a label that is opposite to the label if would have predicted if it had evaluated all the features. 
We demonstrate that a simple rule, comparing each partial sum to a constant stopping-threshold, can speed-up a linear predictor while maintaining generalization accuracy. 

This paper proposes a simple novel test based on Sequential Analysis and stopping methods for Brownian motion to drastically improve the computational efficiency of margin based learning algorithms. Our method accurately stops the evaluation of the margin when the result is the entire summation is evident. Furthermore, our novel algorithm can be easily parallelized. 
\figstochasticbudget
\begin{figure}[t]
\begin{center}
\includegraphics[width=0.6\textwidth]{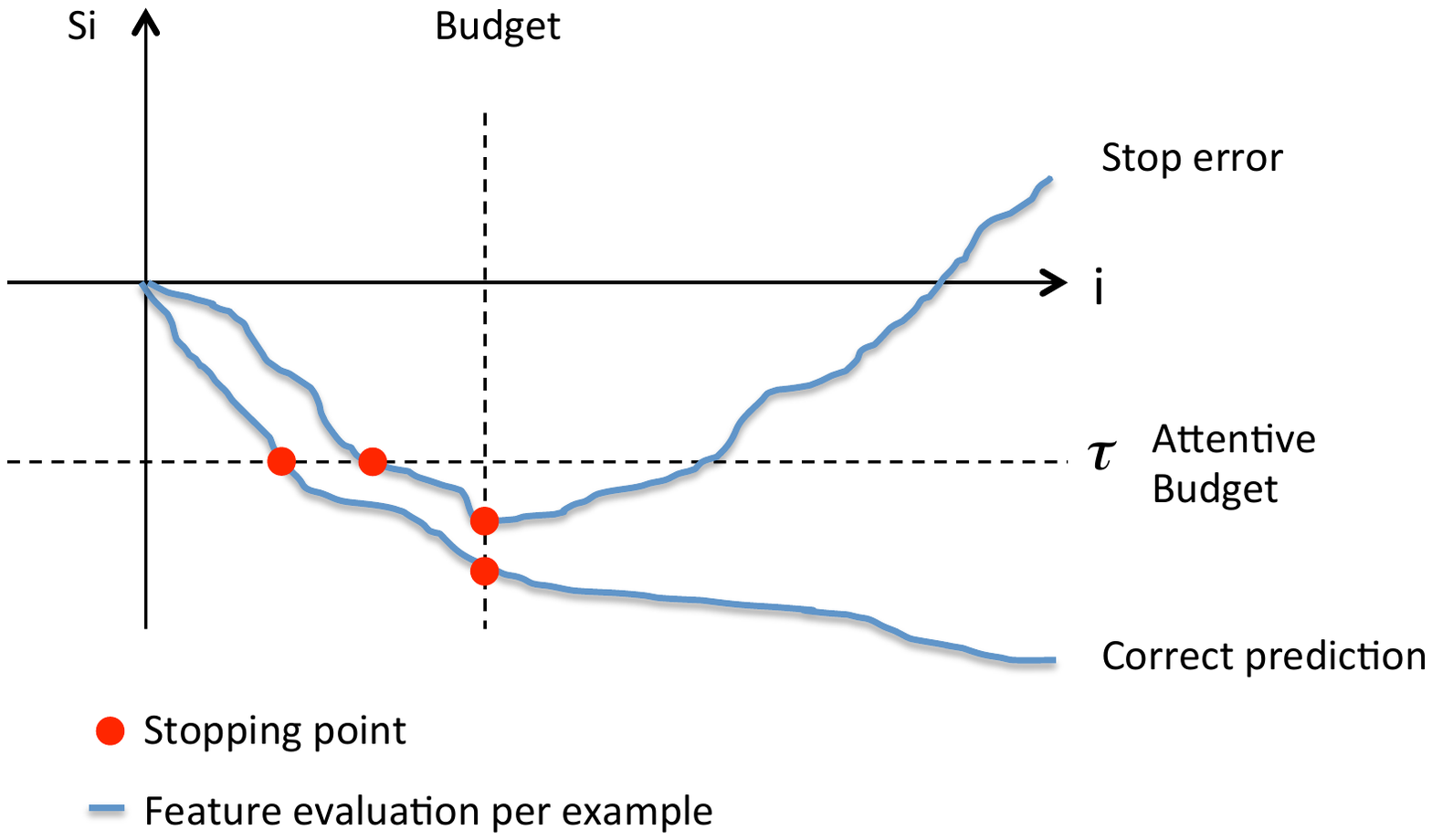}
\caption{Example of the differences between Budgeted prediction and Attentive prediction. Attentive prediction thresholds partial scores, whereas budgeted prediction thresholds the number of features evaluated.}
\label{fig:budget}
\end{center}
\end{figure}
\section{Related Work}
Margin based learning has spurred countless algorithms in many different disciplines and domains. 
The most directly applicable machine learning algorithms are margin based online learning algorithms. 
Many margin based Online Algorithms base their model update on the margin of each example in the stream. Online algorithms such as Exponentiated Gradient \cite{kivinen97exponentiated} and Online Boosting \cite{oza01online} update their respective models by using a margin based potential function. Passive online algorithms, such as the Perceptron \cite{rosenblatt58perceptron} and online passive-aggressive algorithms \cite{crammer06online}, define a margin based filtering criterion for update, which only updates the algorithm's model if the value of the margin falls below a defined threshold. All these algorithms fully evaluate the margin for each example, which means that they evaluate all their features for every example. Recently Shalev-Shwartz et al. \cite{shalev2008svm,shalev2010pegasos} proposed Pegasos, an online SVM solver. The solver is a stochastic gradient descent solver which produces a maximum margin classifier at the end of the training process.

However, if there is a cost (such as time) assigned to a feature evaluation we would like to design an efficient learner which actively choose which features it would like to evaluate. Similar work on the idea of learning with a feature budget was first introduced to the machine learning community by Ben-David and Dichterman \cite{bendavid1998learning}. The authors introduced a formal framework for the analysis of learning algorithm with restrictions on the amount of information it can extract. Specifically allowing the learner to access a fixed amount of attributes, which is smaller than the entire set of attributes. Very recently, both Cesa-Bianchi et al. \cite{cesabianchi2010efficient} and Reyzin \cite{reyzin10boosting} studied how to efficiently learn a linear predictor under a feature budget (see figures \ref{fig:stochastic-budget} and \ref{fig:budget}.) Also Clarkson et al.  \cite{clarkson2010sublinear}  extended the Perceptron algorithm to efficiently learn a classifier in sub-linear time. 

Similar active learning algorithms were developed in the context of when to pay for a label (as opposed to an attribute). Such active learning algorithms are presented with a set of unlabeled examples and decide which examples labels to query at a cost. The algorithm's task is to pay for labels as little as possible while achieving specified accuracy and reliability rates \citep{dasgupta05analysis, cesabianchi06sampling}. Typically, for selective sampling active learning algorithms the algorithm would ignore examples that are easy to classify, and pay for labels for harder to classify examples that are close to the decision boundary.

Our work stems from connecting the underlying ideas between these two active learning domains, attribute querying and label querying. The main idea is that typically an algorithm should not query many attributes for examples that are easy to classify. The labels for such examples, in the label query active learning setting, are typically not queried. For such examples most of the attributes would agree to the classification of the example, and therefore the algorithm need not evaluate too many before deciding their importance. 

\section{The Sequential Thresholded Sum Test}
The novel \textit{Constant Sequential Thresholded Sum Test} is a test which is designed to control the rate of stop-errors a margin based learning algorithm makes. This section describes its adaptation.
\figbbdecisionerror
\subsection{Mathematical Roadmap}
Our task is to find a filtering framework that would speed-up margin-based prediction algorithms by quickly classifying obvious examples. Quick classification is done by creating a test that stops the score evaluation process given the partial computation of the score. We measure the difficulty in classifying an example by the magnitude of its score. We define $\theta$ as the prediction threshold, where examples that are predicted as negative have a score smaller than $\theta$ and the rest are predicted as positive. Statistically, this problem can be generalized to finding a test for early stopping the computation of a partial sum of weighted independent random variables when the result of the full summation is guaranteed with a high probability. Given a required decision error rate we will derive the \textit{Constant Sequential Thresholded Sum Test} (Constant-STST) that will provide a constant early stopping threshold that maintains the required confidence.

Let the sum of weighted independent random variables $(X_i,
i=1,...,n)$ be defined by $S_n = \sum_{i=1}^{n} w_i X_i$, where $w_i$
is the weight assigned to the random variable $X_i$. We require that
$w_i\in R, X_i\in[-1,1]$.
We define $S_n$ as the full sum, $S_i$ as the partial sum.
Once we computed the partial sum up to the $i$th random variable we
know its value $S_i$. Let the stopping threshold at coordinate $i$
be defined by $\tau_i$.  

Pelossof et al. \cite{pelossof2010speeding} previously proposed a Curved-STST by looking at the following conditional probability
\begin{equation}
P(S_n > \theta | \textit{stop before n}) = \frac{P(S_n > \theta,\textit{stop before n}) }{P(\textit{stop before n})},
\end{equation}
where the event ``stop before n'' is the event which occurs when the partial sum crosses a stopping boundary $\textit{stop}\doteq \{S_i < \tau_i\}$ at any point $i$ along the stopping curve, yielding the prediction $S_n \le \theta$. 
The simplicity of deriving the curtailed method stems from the fact that the joint probability and the stopping time probability are not needed to be explicitly calculated to upper bound this conditional. The resulting curved stopping boundary, gives a constant conditional error probability throughout the curve, which means that it is a rather conservative boundary. 

However, if we are interested of controlling stop errors for a given set of examples, we are interested in a slightly different conditional probability $P(\text{stop before } n | S_n < \theta)$. Such is the case in many classification tasks where there are significantly more negatives than positives in the dataset.  This formulation results in a more aggressive boundary which allows higher stop error rates at the beginning of the evaluation and lower stop error rates at the end. Such a boundary stops more evaluations early on, and less later later on. This approach has the natural interpretation that we want to shorten the feature evaluation for obvious negative samples, but we want to prolong the evaluations for positive samples. A Constant boundary achieves this exact ``error spending'' characteristic. Note that this equation can be flipped in order to stop the evaluation of positive examples as well.

\subsection{The Constant Sequential Thresholded Sum Test}
We condition the probability of making a decision error in the following way
\begin{eqnarray}
P(\text{stop before } n | S_n > \theta) = \frac{P(\text{stop before } n , S_n > \theta)}{P(S_n > \theta)}.
\label{eqn:rare-decision-probability}
\end{eqnarray}
We stated in equation \ref{eqn:rare-decision-probability} a
conditional probability function which is conditioned on the
examples of interest. Therefore in this case we are interested in
limiting the stop error rate for examples that are important.
To upper bound this conditional we will make an approximation that
will allow us to apply boundary-crossing probability estimation
for a Brownian bridge. To apply the Brownian bridge to our
conditional probability we note that
\begin{eqnarray}
P(\text{stopped before } n  | S_n > \theta) 
= P(\max_i S_i < \tau | S_n > \theta) 
\approx  P(\min_i S_i < \tau | S_n = \theta).
\end{eqnarray}
where $\tau<0$ is a constant stop threshold. The last approximation holds when the event $\{S_n > \theta\}$ is
rare, i.e.  $EX_i<0$, and $n$ is large, so that the event is
concentrated on $S_n$ being close to $\theta$. 
Now we can
approximate the stop error rate by calculating the corresponding
boundary crossing probability.

\begin{lemma}
\label{thm:bb-upper} Let $T_\tau = \inf\{i:S_i \ge \tau\}$ be the
first crossing time of the random walk over constant $\tau$. Then
the probability of the following decision error,
$P(T_\tau<n|S_n=\theta)$ is approximately equal to $
e^{-\frac{2\tau(\tau-\theta)}{var(S_n)}}$ when $n$ is large.
\end{lemma}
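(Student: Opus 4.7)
The plan is to reduce the statement to a classical closed-form boundary-crossing probability for a Brownian bridge, obtained via the reflection principle, and then invoke a functional central limit theorem to transfer the formula back to the discrete random walk.

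First, I would invoke the invariance principle (Donsker's theorem): since $S_i=\sum_{j\le i} w_j X_j$ is a sum of independent, bounded, zero-to-mildly-biased random variables, the rescaled polygonal process $\{S_{\lfloor nt\rfloor}\}_{t\in[0,1]}$ converges weakly to a Brownian motion $B_t$ with $\mathrm{Var}(B_1)=\mathrm{Var}(S_n)$. The discussion preceding the lemma already replaced the rare conditioning event $\{S_n>\theta\}$ by the pinned event $\{S_n=\theta\}$, so I may assume we are conditioning on the endpoint. Under that conditioning, the limit process is exactly a Brownian bridge from $0$ to $\theta$ of total variance $\mathrm{Var}(S_n)$, and the event $\{T_\tau<n\}$ translates, in the limit, to the event that this bridge crosses the level $\tau$ at some time in $(0,1)$.

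Second, I would compute the hitting probability of the Brownian bridge in closed form by the reflection principle. For a standard Brownian motion $W_t$ starting at $0$, the reflected path past the first passage time at level $\tau$ gives a bijection between paths with $\min_t W_t \le \tau$ and reflected paths, from which one obtains, for a bridge $B$ from $0$ to $\theta$ of variance $\sigma^2=\mathrm{Var}(S_n)$,
\begin{equation*}
P\!\left(\min_{0\le t\le 1} B_t \le \tau \,\Big|\, B_1=\theta\right) \;=\; \exp\!\left(-\frac{2\tau(\tau-\theta)}{\sigma^2}\right),
\end{equation*}
valid when $\tau$ lies on the appropriate side of both $0$ and $\theta$ (in our setting $\tau<0$ is the negative stop threshold, matching the $\min$ formulation). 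Combining with the first step yields the claimed asymptotic equivalence for $P(T_\tau<n\mid S_n=\theta)$.

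The main obstacle is justifying the passage from the discrete conditional law to the Brownian-bridge limit: Donsker's theorem gives weak convergence of the unconditional path, but here we are conditioning on the singular event $\{S_n=\theta\}$. Making this rigorous requires either a local CLT for the weighted sum, or arguing via a small window $\{S_n\in[\theta,\theta+\varepsilon]\}$ and letting $\varepsilon\to 0$ after $n\to\infty$; since the weights are bounded and $\mathrm{Var}(S_n)\to\infty$, both approaches are standard. Because the lemma is stated only as an approximation for large $n$, I would not belabour the local-CLT verification and would simply note that the approximation error is $o(1)$ in the stated regime.
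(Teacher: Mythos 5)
Your proposal is correct and follows essentially the same route as the paper: the paper's proof invokes the functional central limit theorem to pass to a Brownian motion limit and then cites its Appendix lemma, which derives the bridge crossing probability $\exp\left\{-\frac{2\tau(\tau-\theta)}{var(S_t)}\right\}$ by exactly the reflection-principle computation you describe. Your added remark about rigorously justifying the conditioning on the singular event $\{S_n=\theta\}$ (via a local CLT or a shrinking window) addresses a gap the paper silently passes over, but it does not constitute a different approach.
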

\begin{proof}
By the Functional Central Limit Theorem, we know that
$S_{[tn]}/\sqrt {S_n}$, $t\in [0, 1]$, converges to the Brownian
Motion process. The conclusion of the lemma then follows from
Lemma 2 in the Appendix.
\end{proof}

\begin{theorem}
(Suppose that $\theta = 0$). Choose $\tau =
\sqrt{var(S_n)}\sqrt{log\frac{1}{\sqrt{\delta}}}$ for the constant
boundary. Then the rate for decision error $\{T_\tau < n | S_n <
0\}$ is approximately $\delta$.
\end{theorem}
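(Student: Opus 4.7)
The plan is to reduce the theorem to a one-line consequence of Lemma 1 combined with the rare-event approximation justified in the paragraph preceding that lemma. My first step would be to replace the conditioning event $\{S_n < 0\}$ by the pinned equality $\{S_n = 0\}$, using the same reasoning that earlier lets the paper pass from
\begin{equation}
P(\max_i S_i < \tau \mid S_n > \theta) \;\text{to}\; P(\min_i S_i < \tau \mid S_n = \theta):
\end{equation}
under the assumption that the walk first climbs to $\tau > 0$ and then ends below $0$ is rare, the contributing paths concentrate on $S_n \approx 0$, so at leading order
\begin{equation}
P(T_\tau < n \mid S_n < 0) \;\approx\; P(T_\tau < n \mid S_n = 0).
\end{equation}

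Next I would apply Lemma 1 with $\theta = 0$ to rewrite the right-hand side as $\exp(-2\tau^2/\operatorname{var}(S_n))$. The final step is pure algebra: substituting $\tau = \sqrt{\operatorname{var}(S_n)}\sqrt{\log \delta^{-1/2}}$ gives $\tau^2 = \tfrac{1}{2}\operatorname{var}(S_n)\log \delta^{-1}$, so $2\tau^2/\operatorname{var}(S_n) = \log \delta^{-1}$ and the exponential collapses to $\delta$, which is exactly the statement.

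The main obstacle is that first step, where we trade a clean conditional on an equality (which, via the functional CLT, becomes a pinned Brownian bridge to which Lemma 1 applies verbatim) for a condition on an inequality describing the actual decision error. To make the reduction rigorous I would write
\begin{equation}
P(T_\tau < n \mid S_n < 0) \;=\; \int_{-\infty}^{0} P(T_\tau < n \mid S_n = s)\, f_{S_n \mid S_n < 0}(s)\, ds,
\end{equation}
observe from the bridge formula $e^{-2\tau(\tau - s)/\operatorname{var}(S_n)}$ that the integrand is monotone in $s$ (largest at $s = 0^{-}$), and argue that with the negative-drift assumption the conditional density of $S_n$ given $\{S_n < 0\}$ concentrates near $0$ on the scale $\sqrt{\operatorname{var}(S_n)}$, so the integral is well approximated by its value at $s = 0$. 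Everything after that is the mechanical substitution above.
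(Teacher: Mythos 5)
Your proposal takes essentially the same route as the paper: the paper's own proof is literally the one-line substitution of $\theta=0$ and $\tau=\sqrt{\operatorname{var}(S_n)}\sqrt{\log\delta^{-1/2}}$ into Lemma 1 (yielding $\exp(-2\tau^2/\operatorname{var}(S_n))=\delta$), with the inequality-to-equality conditioning step taken for granted from the approximation asserted in the text just before that lemma, and your algebra is exactly right. Your integral argument for replacing $\{S_n<0\}$ by $\{S_n=0\}$ is in fact more careful than anything the paper offers, though note that for $S_n$ to concentrate near $0$ conditional on $\{S_n<0\}$ that event must be rare, which requires \emph{positive} drift $EX_i>0$ (as assumed in Theorem 2), not the negative drift you invoke.
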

\begin{proof}
The Theorem follows from Lemma \ref{thm:bb-upper} directly by plugging in
$\theta=0$ and $\tau=\sqrt{var(S_n)}\sqrt{log\frac{1}{\sqrt{\delta}}}$.
\end{proof}

When using this boundary for prediction, we can directly see the implication on the error rate to the classifier, since the error rate of an attentive predictor is equal to at most the error rate of the full predictor plus the stop error rate.

\subsection{Average Stopping Time for the Curved and Constant-STST}
We now show that the expected number of features evaluated for the
Curved and the Constant STST boundaries is in the order of
$O(\sqrt{n \log \delta^{-0.5}})$. This is obtained by limiting the range of values
$X_i$ can take. Without loss of generality, the theorem is proved for the case where we early stop the computation of positive predictions, and can be applied also for the mirrored case of early stopping negative predictions.
\begin{theorem}
\label{thm:stopping-time}
Suppose that random variables $X_i$ are
bounded, i.e. $|X_i| \le k$ for a constant $k$, and that $EX_i
> 0$. Let the stopping time be defined by
$T = \inf \{i: S_i \ge \sqrt{var(S_n)\log \delta^{-0.5}} \}$. Then
the expected stopping time is of order $O(\sqrt n \log\delta^{-0.5})$.
\end{theorem}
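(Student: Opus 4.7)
The plan is to invoke Wald's identity together with a one-step bound on the overshoot at the stopping boundary. First, since $EX_i > 0$ and the increments $w_i X_i$ are bounded, the random walk $S_i$ has strictly positive drift; a standard exponential tail bound on $\{T > i\}$ (e.g.\ Chernoff applied to $\mu i - S_i$) shows that $T$ is almost surely finite and in fact has finite expectation, which justifies the exchanges of expectation and summation needed in the steps below.

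Next, I would apply Wald's identity to relate $E[S_T]$ and $E[T]$. In the simplest unweighted IID case this gives $E[S_T] = \mu \, E[T]$ with $\mu = EX_i > 0$. For the weighted sum $S_n = \sum_i w_i X_i$ with common mean $\mu$, the natural generalization (obtained by applying the optional stopping theorem to the martingale $S_i - \mu \sum_{j \le i} w_j$) yields $E[S_T] = \mu \, E\bigl[\sum_{i=1}^{T} w_i\bigr]$. Under the uniform boundedness of the weights that is already implicit in the scaling $var(S_n) = O(n)$, this lower-bounds $E[S_T]$ by $c\,\mu\,E[T]$ for a positive constant~$c$.

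Third, I would bound $E[S_T]$ from above via the overshoot. By the definition of $T$ we have $S_{T-1} < \tau$, so $S_T \le \tau + |w_T X_T| \le \tau + k\,w_{\max}$; taking expectations gives $E[S_T] \le \tau + O(1)$. Combining this with the Wald lower bound yields $E[T] = O(\tau/\mu) = O\bigl(\sqrt{var(S_n)\,\log \delta^{-0.5}}\bigr) = O\bigl(\sqrt{n \log \delta^{-0.5}}\bigr)$, which is in particular $O(\sqrt{n}\,\log \delta^{-0.5})$ as stated in the theorem.

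The main obstacle I anticipate is handling the weighted, non-identically-distributed case carefully: Wald's identity in its textbook form assumes IID summands, and the correct formulation for $S_n = \sum_i w_i X_i$ is cleanest if phrased via optional stopping on the associated martingale. A secondary subtlety is that one needs a uniformly positive effective drift after weighting in order to pass from $E[\sum_{i\le T} w_i]$ to $E[T]$; this requires an additional mild regularity condition on the weights (e.g.\ $\sum_{i \le m} w_i \gtrsim m$) beyond the bare assumption $EX_i > 0$, and one should state it explicitly before invoking Wald.
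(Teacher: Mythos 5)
Your proposal follows essentially the same route as the paper's proof: an overshoot bound $ES_T \le \tau + k$ at the first crossing, combined with Wald's identity to extract $ET = O(\tau/EX) = O\bigl(\sqrt{n\log\delta^{-0.5}}\bigr)$. You are in fact more careful than the paper, which silently drops the weights $w_i$ (writing $ES_T = ET\,EX$ as if the summands were unweighted i.i.d.) and does not verify that $T$ is integrable before invoking Wald; your martingale/optional-stopping formulation and the explicit regularity condition on the weights address exactly the points the paper glosses over.
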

\begin{proof}
\begin{equation}
ES_T = ES_{T-1} + EX_T
\le ES_{T-1} + k
\le \sqrt{var(S_n)\log \delta^{-0.5}} + k
\end{equation}

The second inequality holds since the random walk only crossed the boundary for the first time at time $T$ and therefore was under the boundary at time $T-1$. By applying Wald's Equation $ES_T = ET EX$  \cite{wald44cumulative}, we get
\begin{equation}
ET = \frac{\sqrt{var(S_n)\log \delta^{-0.5}} + k}{EX} 
= O(\sqrt n),
\end{equation}
where $c,k$, and $EX$ are constants.
\end{proof}
See figure \ref{fig:bb_error_rate} for simulation results.

\section{Experiments with Attentive Prediction}
We apply the above stopping rule to linear prediction. For the different classification tasks 'Gisette', 'MNIST 2 vs. 5', and 'Real-sim' we treat the data in the same way. We split the data to training and test sets, train an SVM classifier on a training set, and predict on the held out test set. We calculate the mean kernel value for each support vector on the positive test set $\mu_i = E_lK_{il}(X_i,X_l)$ (l is an index over positive examples, and i is an index over support vectors), and store it as constants. When predicting we evaluate the corrected score as $S_t = \sum_{i=1}^t \alpha_i (K(X_i,X) - \mu_i)$, which removes the trend from the positive set. In our experiments to induce independence between the support vectors we randomly permute the order of the support vectors and then calculate the partial scores. In experiments where we sorted the support vectors by $|\alpha_i|$ we observed unstable results. 

We conducted three experiments to test the loss of the Attentive approach versus a Budgeted approach and the Full classifier. We tested the algorithm on the Gisette dataset which includes 6,000 training examples, 1,000 test examples, and 5,000 features. An SVM was trained on this dataset with a linear kernel, and $C=1$, resulting in a model with 1084 support vectors. Our second dataset was MNIST digits 2 vs. 5, which comprised of $28x28$ images of the number 2 and the number 5. The training set included 11,379 images, and the test set 1,924. We trained an SVM with an RBF kernel with $\sigma=0.1, C=1$ and obtained a model with 781 support vectors. Finally we trained an SVM on the Real-sim dataset which is comprised of 72,309 examples and 20,958 dimensions. The data was split to 2,000 randomly chosen test examples, and the rest were used for training. For training we used a linear kernel with C=1. The results of the full training can be seen in figure \ref{fig:results} top row in red. We obtained highly accurate classifiers. The to compare Attentive prediction with Budgeted prediction we thresholded the partial scores at a certain threshold, obtained a confusion matrix for that threshold, as well as the average number of support vectors evaluated. We then ran Budgeted prediction on the exact same set with a budget set to the average number of support vectors evaluated by the Attentive predictor and obtained a comparable confusion matrix. Since the bounds as seen by the simulation are not tight we tested all possible thresholds, setting $\tau \in \{\min S_{il},..,\theta\}$ for the entire dataset. We also set the prediction threshold $\theta=0$. We tested for each example in the test set $AttentivePredict(K(.,X)-\mu, \alpha, \tau)$, where $K(.,X)-\mu$ is the centered Kernel evaluation, and $\alpha$ is a vector of weights obtained by the SVM. Similarly we tested Budgeted prediction, with a budget equal to the average number of features calculated by AttentivePredict over the entire test set $BudgetedPredict(K(.,X)-\mu,\alpha, b_{\tau}$).

\begin{algorithm}[h]
\begin{algorithmic}
\caption{Attentive Prediction}
\label{alg:att_pred}
\STATE AttentivePredict(X,w, $\tau$)
\IF {$\exists i: \sum_{i=1}^n w_i X_i < \tau$} 
	\STATE return $\tau$
\ELSE
	\STATE return $\sum_{i=1}^n w_i X_i$
\ENDIF
\end{algorithmic}
\end{algorithm}
\begin{algorithm}[h]
\caption{Budgeted Prediction}
\label{alg:bug_pred}
\begin{algorithmic}
\STATE BudgetPredict(X,w, b)
\STATE return $\sum_{i=1}^b w_i X_i$ 
\end{algorithmic}
\end{algorithm}

The results of the early stopping algorithms can be seen in figure \ref{fig:results}. The top row shows the precision recall for the different algorithms over the three sets. AttentivePredict outperforms BudgetedPredict on all tasks. The reason for this is probably since the attentive threshold is only applied as a one sided test, therefore it is likely to reject mostly negative examples and classify them as negatives, thereby lowering the false positive rate of this classifier. Negative examples that may have been predicted by full feature evaluation as positive, now have a chance of being rejected at an interim point. Furthermore, since the distribution of partial scores for the positives is typically above that of the negatives, one can set up an attentive threshold that positive examples would never reach, and negative examples might hit, thereby improving the FP rate at no expense. From the bottom part of the figure we can see that indeed the Attentive predictor produces lower stop error rates than the budgeted predictor at the low range of under $30\%$ stop error rate. Beyond that, the Attentive threshold gets very close to the positive distribution at the beginning of the feature evaluation, and is too aggressive. A more conservative attentive predictor can be used at this range. We observe that the attentive predictor can significantly lower the number of SVs evaluated (up to 50\% less) without much loss in predictive power.

\subsection{Conclusions}
We sped up prediction algorithms up to 50\% without significant loss in predictive accuracy. We proved that the expected speedup under independence assumptions of the weak hypotheses is $O(\sqrt{n\log\delta^{-0.5}})$ where $n$ is the set of all features used by the learner for discrimination.

As a future direction we wish to study the performance of such algorithms when the independence assumptions are broken, by sorting data by feature weight or sampling according to other measures of importance.

The thresholding process creates a natural attention mechanism for linear predictors. Examples that are easy to classify (such as background) are filtered quickly without evaluating many of their features. On the other hand examples that are hard to classify, the majority of their features are evaluated. By spending little computation on easy examples and a lot of computation on hard ``interesting'' examples the Attentive algorithms exhibit a stochastic focus-of-attention mechanism.

\renewcommand{\thesubfigure}{\relax}

\begin{figure}[tbp]
\centering

\subfigure[]{
   \includegraphics[width=0.3\textwidth] {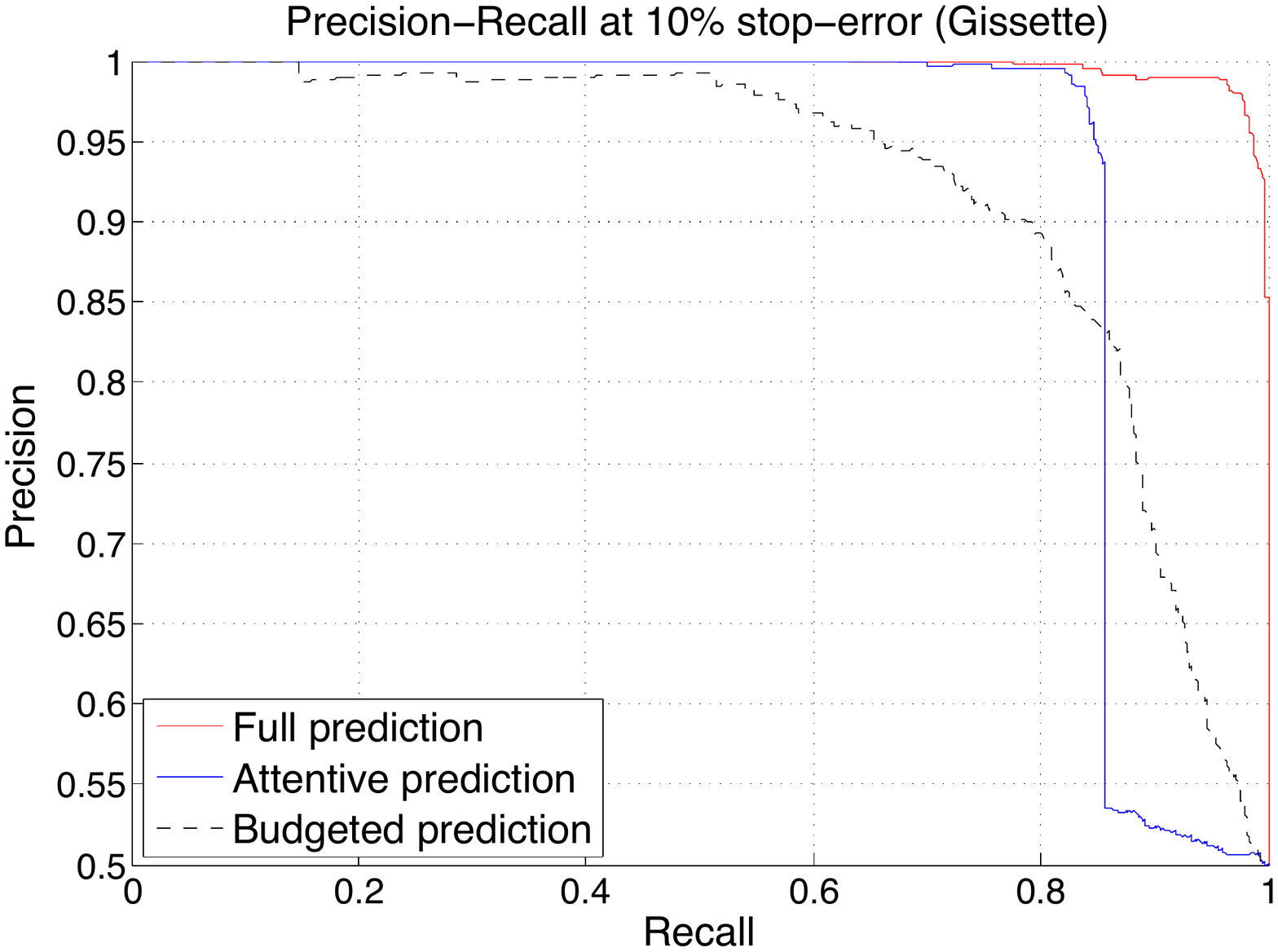}
   \label{fig:subfig1}
 }
 \subfigure[]{
   \includegraphics[width=0.3\textwidth] {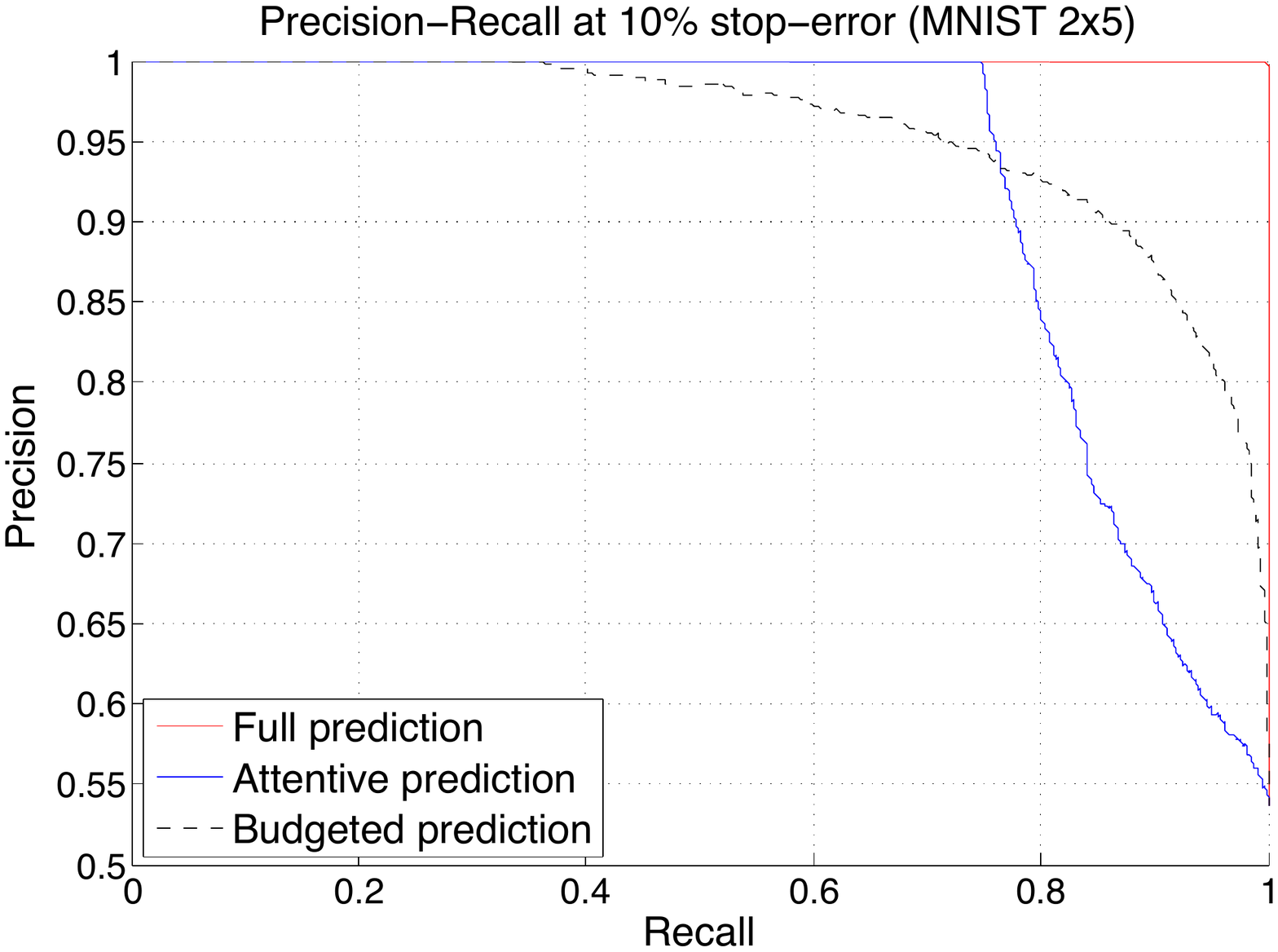}
   \label{fig:subfig2}
 }
 \subfigure[]{
   \includegraphics[width=0.3\textwidth] {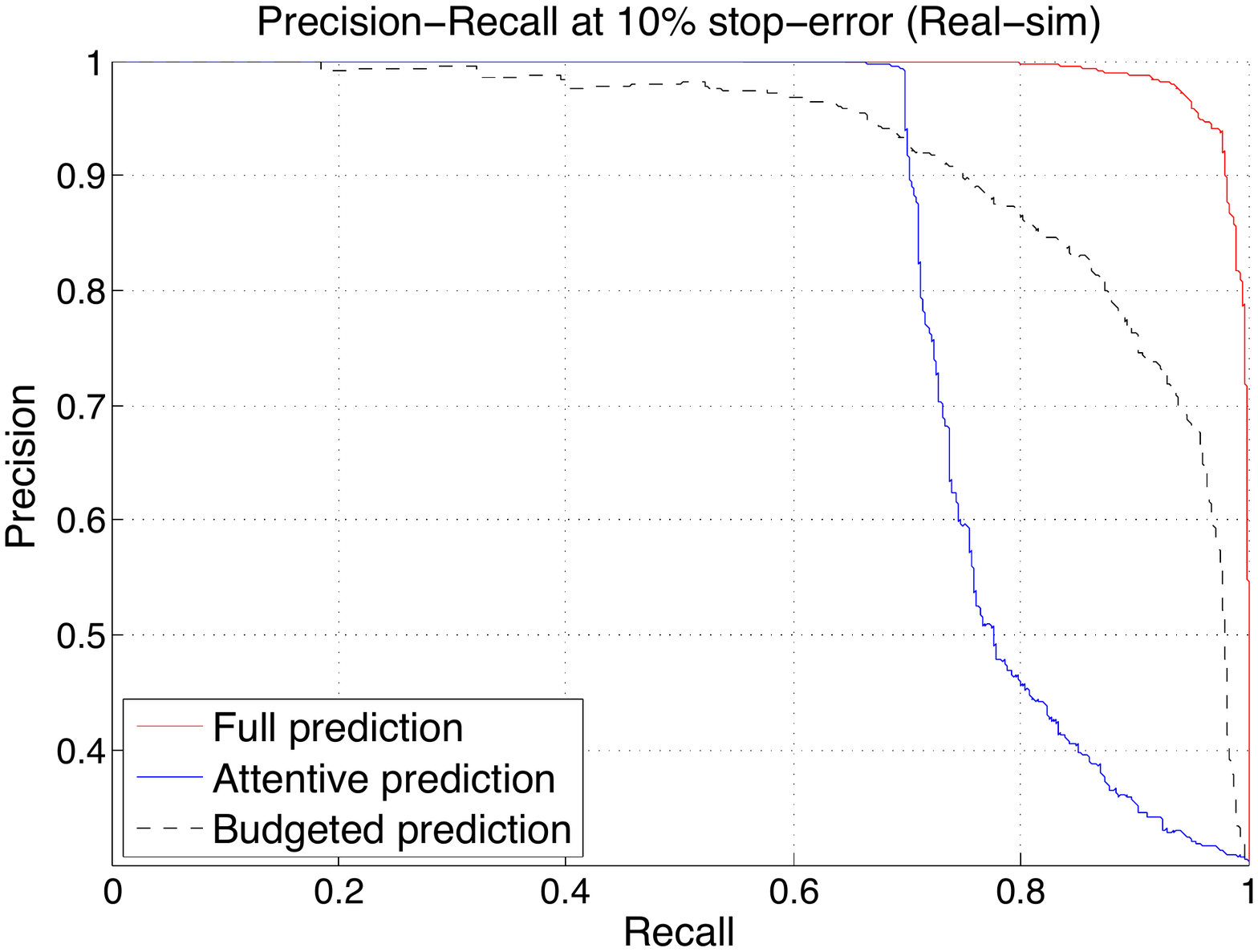}
   \label{fig:subfig3}
 }\\
\subfigure[]{
   \includegraphics[width=0.3\textwidth] {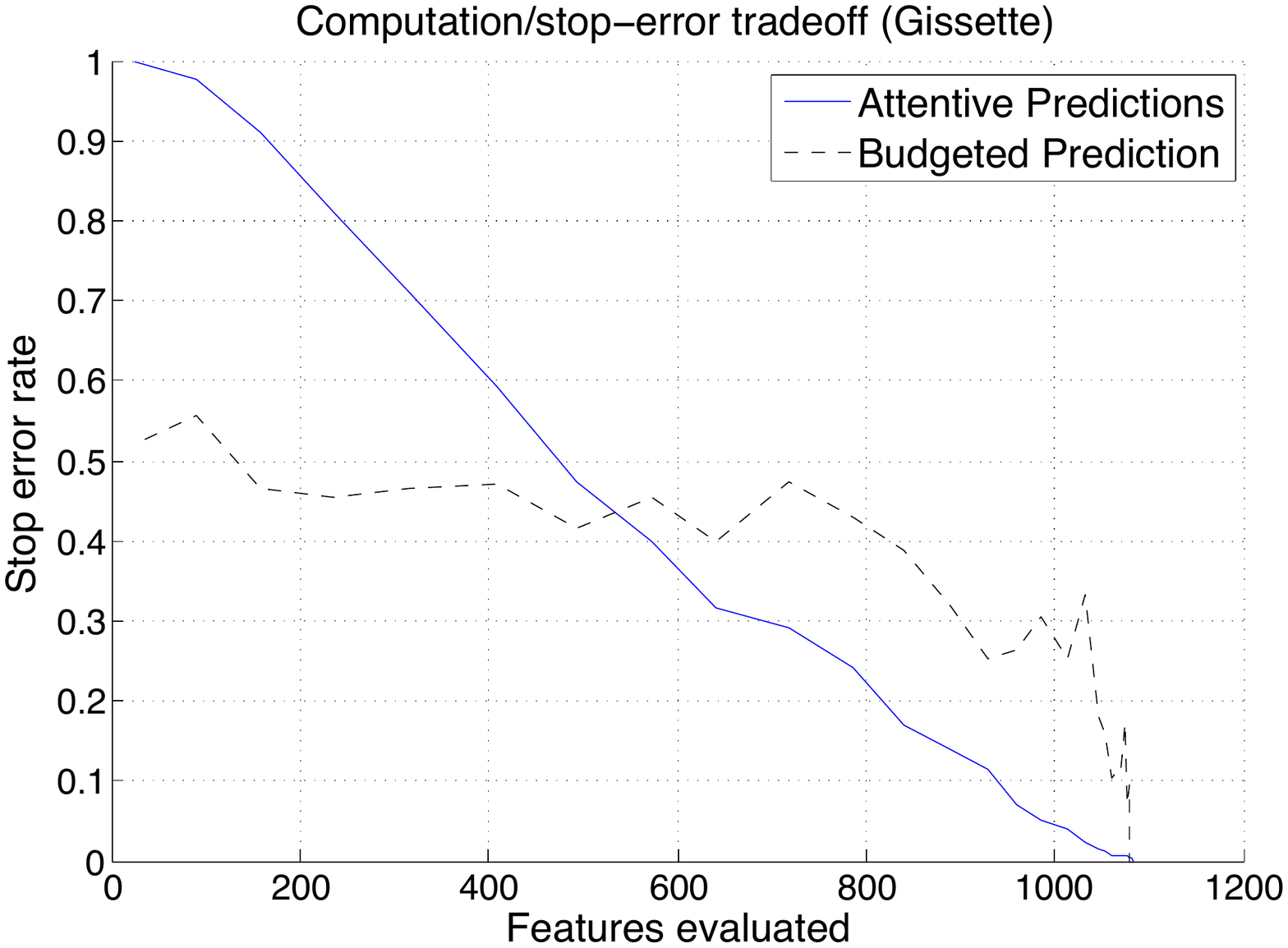}
   \label{fig:subfig1}
 }
 \subfigure[]{
   \includegraphics[width=0.3\textwidth] {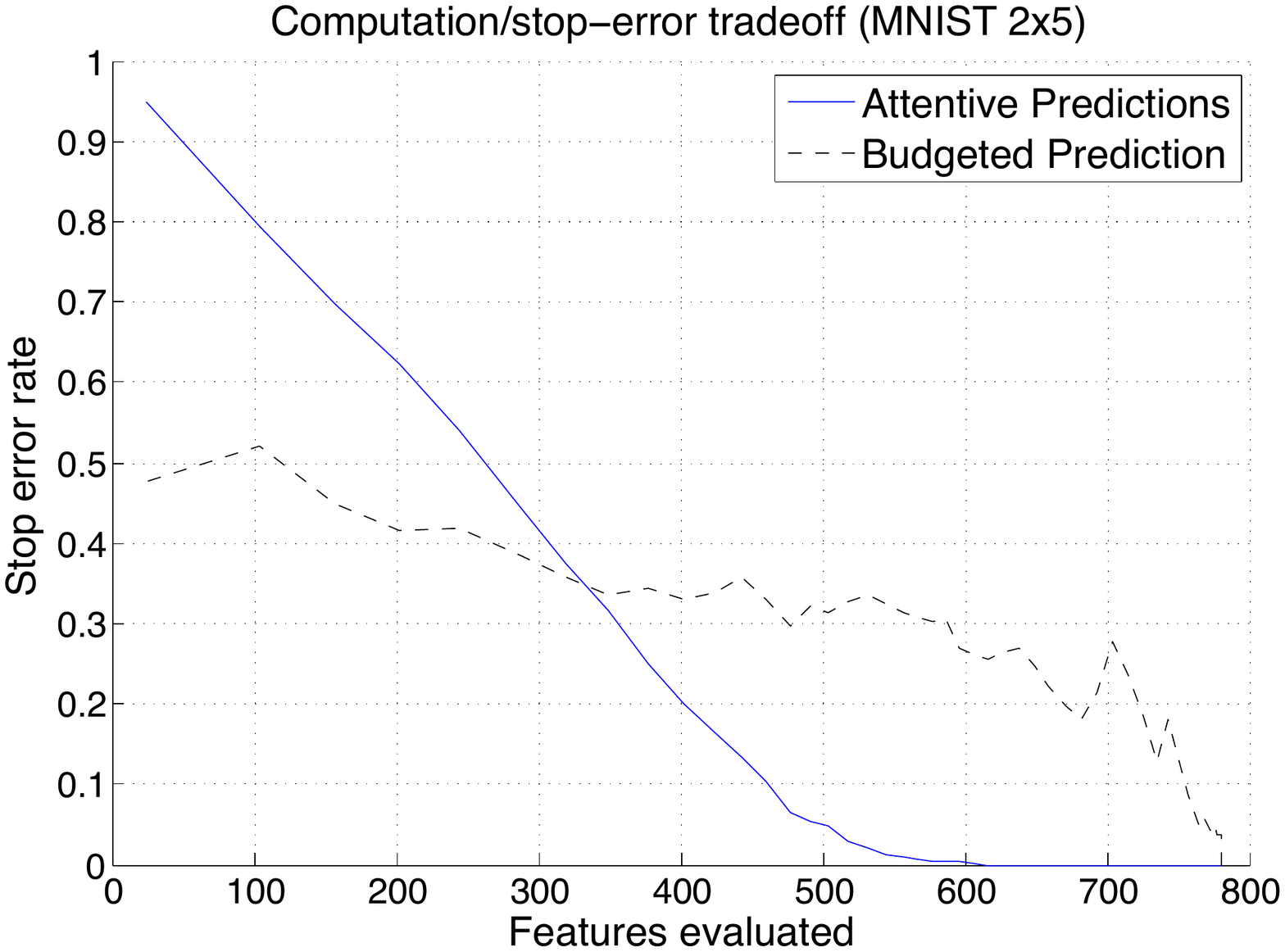}
   \label{fig:subfig2}
 }
 \subfigure[]{
   \includegraphics[width=0.3\textwidth] {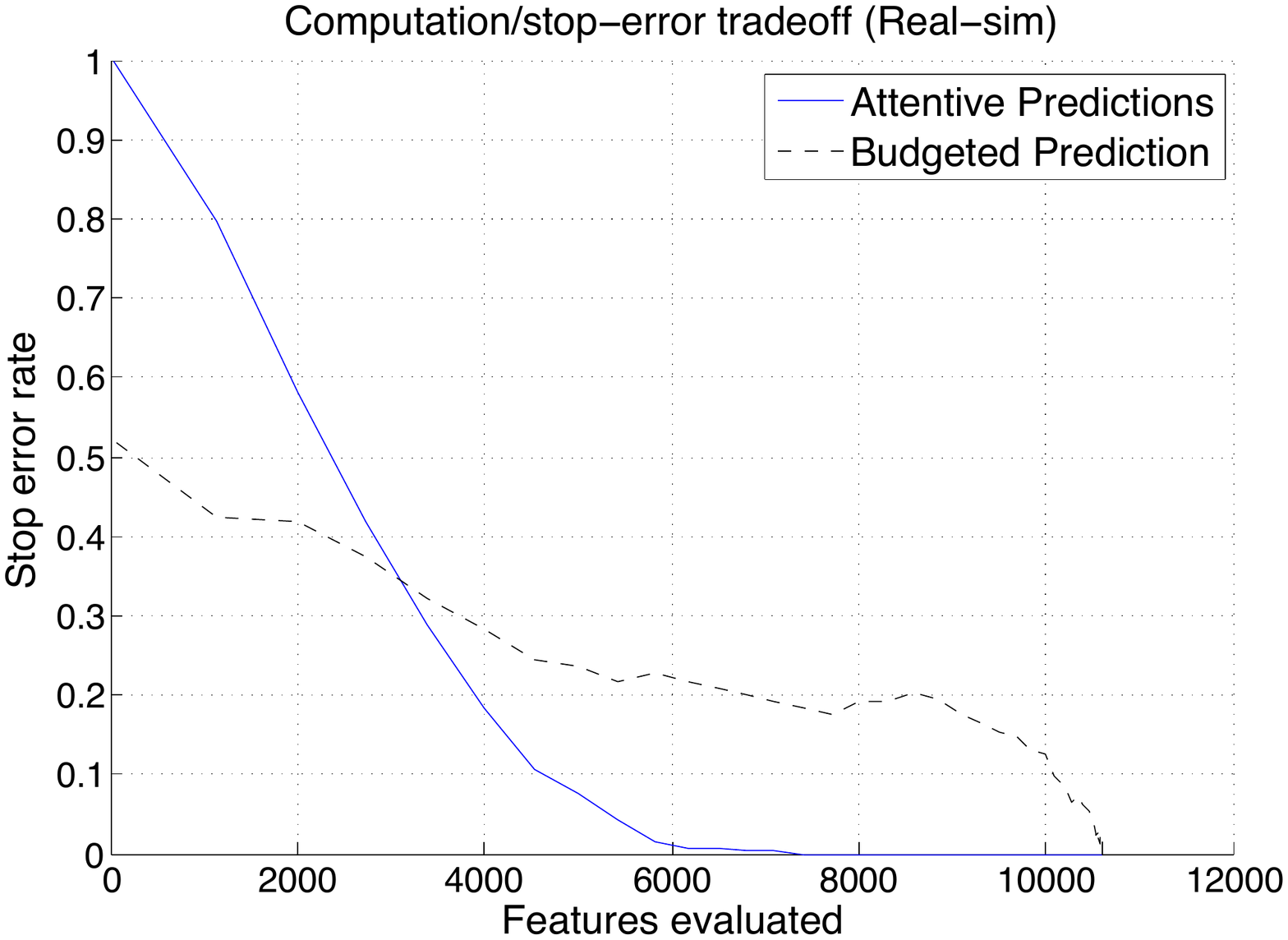}
   \label{fig:subfig3}
 }

\label{fig:results}
\caption{Comparison of Attentive Prediction with Budgeted Prediction. Top, Precision-recall for the actual performance of the end classifier. Bottom, computation/stop-error tradeoff. Attentive prediction outperforms Budgeted Prediction in all classification tasks. Since the Attentive predictor thresholds negatives at a constant, these predictions are void of magnitude and there is a steep fall off in the precision recall plot at that point. In terms of the stop-error/computation tradeoff, Attentive prediction requires less computation than Budgeted prediction if the required error rate is less than 30\%. }
\end{figure}

\section{Appendix }
\label{sec:appendix}

\begin{lemma}
Let $S_u$, $u\ge 0$ be a continuous time Brownian motion process
and $T_\tau=\inf \{u: S_u\ge \tau\}$. Then, for $\theta<\tau$ and
$t>0$,
\begin{equation}
P(T_\tau < t | S_t = \theta)=\exp\left\{
-\frac{2\tau(\tau-\theta)}{var(S_t)} \right\}.
\end{equation}
\end{lemma}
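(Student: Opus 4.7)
The identity is a classical reflection-principle computation, and I would approach it by first reducing the stopping-time event to a maximum event, then applying the reflection principle, and finally extracting the conditional density by a ratio of Gaussian densities.

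The first step is the reduction
\[
\{T_\tau < t\} = \{M_t > \tau\}, \qquad M_t := \sup_{0 \le u \le t} S_u,
\]
which holds almost surely because Brownian paths are continuous and $S_0 = 0 < \tau$. So the statement to prove is equivalent to
\[
P(M_t \ge \tau \mid S_t = \theta) = \exp\!\left\{-\tfrac{2\tau(\tau-\theta)}{\mathrm{var}(S_t)}\right\}.
\]

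Next I would invoke the reflection principle in its joint form: for $\theta < \tau$,
\[
P(M_t \ge \tau,\; S_t \le \theta) = P(S_t \ge 2\tau - \theta).
\]
The justification is the standard path-reflection argument — reflecting the post-$T_\tau$ portion of the path about the level $\tau$ gives a bijection (with equal Wiener measure, by the strong Markov property) between paths in the event on the left and paths ending above $2\tau - \theta$. I would then differentiate both sides in $\theta$ to convert the ``$\le \theta$'' event into a density statement:
\[
P(M_t \ge \tau \mid S_t = \theta)\, f_{S_t}(\theta) = f_{S_t}(2\tau - \theta),
\]
where $f_{S_t}$ is the $N(0,\mathrm{var}(S_t))$ density.

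The final step is a direct Gaussian computation: dividing the two densities gives
\[
P(M_t \ge \tau \mid S_t = \theta) = \exp\!\left\{-\tfrac{(2\tau-\theta)^2 - \theta^2}{2\,\mathrm{var}(S_t)}\right\},
\]
and the algebraic identity $(2\tau-\theta)^2 - \theta^2 = 4\tau(\tau-\theta)$ delivers the claimed exponent. The only subtle point — and the one I would be careful to state precisely — is the differentiation step, since $\{S_t = \theta\}$ has probability zero and the conditional probability must be interpreted via the regular conditional density obtained from differentiating the cumulative quantity in $\theta$. Everything else is bookkeeping: continuity of paths for step one, the strong Markov property for the reflection step, and a Gaussian ratio for the final formula.
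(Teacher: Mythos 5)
Your proposal is correct and follows essentially the same route as the paper's proof: both reduce the event to a first-passage/maximum statement, apply the reflection principle to identify the joint probability with $P(S_t \ge 2\tau-\theta)$, and obtain the result as a ratio of Gaussian densities. Your version is slightly more careful about the conditioning (differentiating the joint CDF in $\theta$ rather than the paper's informal $S_t \in d\theta$ notation), but the substance is identical.
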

\begin{proof}
We can look at an infinitesimally small area $d\theta$ around
$\theta$. Then, by definition of conditional probability,
\begin{equation}
P(T_\tau \le t | S_t = \theta) = \frac{P(T_\tau < t , S_t\in
d\theta)}{P(S_t\in d\theta)}, \label{eqn:bb-lim}
\end{equation}
where $S_t\in d\theta$ denotes $S_t\in [\theta, \theta+d\theta)$.
For the numerator, we have, by the reflection principle,
\begin{eqnarray*}
P(T_\tau < t , S_t\in d\theta)
&=& P( T_\tau < t, S_t \in 2\tau-d\theta)
= P(S_t \in 2\tau-d\theta) \\
&=& \frac{1}{\sqrt{var(S_t)}}\phi\left(
\frac{2\tau-\theta}{\sqrt{var(S_t)}}\right)d\theta,
\end{eqnarray*}
where $\phi$ is the standard normal density function. But we
certainly know that for the denominator
\[P(S_t\in d\theta)= \frac{1}{\sqrt{var(S_n)}}
\phi\left( \frac{\theta}{\sqrt{var(S_n)}}\right)d\theta.\]
Plugging the preceding two equalities back into \ref{eqn:bb-lim},
we get
\begin{eqnarray*}
P(T_\tau < n | S_n = \theta)
&=& \frac{\phi\left( \frac{2\tau-\theta}{\sqrt{var(S_t)}}\right)}{\phi\left( \frac{\theta}{\sqrt{var(S_t)}}\right)} 
= \exp\left\{ -\frac{1}{2} \frac{(2\tau-\theta)^2}{{var(S_t)}} + \frac{1}{2}\frac{\theta^2}{{var(S_t)}} \right\} \\
&=& \exp\left\{ -\frac{2\tau(\tau-\theta)}{{var(S_t)}} \right\}.
\end{eqnarray*}
\end{proof}

\small{

\bibliographystyle{plain}
}

\end{document}